\newcommand*\circled[1]{\tikz[baseline=(char.base)]{\node[shape=circle,draw,inner sep=2pt] (char) {#1};}}
\newtheorem{thm}{Theorem}
\DeclareMathOperator{\diag}{diag} 
\newtheorem{lem}{Lemma}
\def\eqdef {\buildrel \rm def \over =}
\def\one{{\bf 1}}
\def\Re{\mathbb{R}}
\def\A{\mathcal{A}}
\def\S{\mathcal{S}}
\def\eqdef {\buildrel \rm def \over =}
\long\def\comment#1{}
\def\p{\phi}
\def\grad{\nabla}
\begin{document}

\twocolumn[

\icmltitle{Convergent Actor-Critic Algorithms Under Off-Policy Training and Function Approximation}




{\bf Hamid Reza Maei}\\
Criteo Research, Palo Alto, CA 94301, USA.\\
\href{mailto:h.maei@criteo.com}{h.maei@criteo.com}

%
%

\vskip 0.3in
]




\begin{abstract}
We present the first class of policy-gradient algorithms that work with both state-value and policy function-approximation, and are guaranteed to converge under off-policy training. Our solution targets problems in reinforcement learning where the action representation adds to the-curse-of-dimensionality; that is, with continuous or large action sets, thus making it infeasible to estimate state-action value functions (Q functions). Using state-value functions helps to lift the curse and as a result naturally turn our policy-gradient solution into classical Actor-Critic architecture whose Actor uses state-value function for the update. Our algorithms, Gradient Actor-Critic and Emphatic Actor-Critic, are derived based on the exact gradient of averaged state-value function objective and thus are guaranteed to converge to its optimal solution, while maintaining all the desirable properties of classical Actor-Critic methods with no additional hyper-parameters. To our knowledge, this is the first time that convergent off-policy learning methods have been extended to  classical Actor-Critic methods with function approximation. 

\end{abstract}

\section{Introduction and Related Works}

One of the most important desirable features of a Reinforcement Learning (RL) algorithm is the ability to learn off-policy. Off-policy learning refers to learning about a (or multiple) desirable policy (policies)  while the agent acts according to its own behavior policy, which may involve exploration. Off-policy learning is important because it allows the agent to learn about an optimal policy while it is exploring. It is also important for the case of off-line learning: for example, for the case of recommendation systems, we would like to learn about a better recommendation strategy than the one used previously, or conduct off-line A/B testing to avoid costs. Whether the learning is happening online or offline, freeing these two policies from each other makes the algorithms modular and easier to implement. For example, Q-learning is an off-policy learning because the agent can learn about a greedy policy while it could follow an exploratory policy. However, Q-learning has some limitations including the requirement for limited number of actions. Policy-gradient methods, on the other hand, are suitable to use for continuous actions (Williams, 1987; Sutton et~.al, 1999). Reinforce (Williams, 1987) is one of the most popular policy-gradient methods, however, the learning only can be done on-policy. In addition, the learning agent should wait until it collects all the rewards and then update. There has been attempts to make an off-policy version of Reinforce but the algorithm suffers from huge variance, particularly when the time-horizon is large or infinite (Tang \& Abbeel, 2010), as each reward signal must be multiplied by the {\em products} of importance ratios (Precup et~al., 2001). Temporal-Difference (TD) Learning methods solve this problem but they have been used for value function based methods (Sutton et~al., 2009; Maei et~al., 2010; Maei, 2011; Sutton et~al., 2016).  The classical Actor-Critic Architectures (Barto et~al., 1983; Sutton, 1984; Sutton et~al., 1999) provide an intuitive framework which combine both state-value function and policy-gradient ideas as part of Critic and Actor, respectively.  

The Q-Prop algorithm (Gu et~al., 2017)  uses Actor-Critic architecture for off-policy learning but  it uses state-action value functions (known as Q-functions) that requires representations for both state and actions, which implies a significant number of learning parameters (specially for continuous actions) making it potential for the curse-of-dimensionality/overfitting. The off-policy Actor-Critic algorithm proposed in Degris. et~al. (2012) uses state-value functions to update the Actor. The Critic uses  the GTD($\lambda$) algorithm (Maei, 2011) to estimate an off-policy evaluation for state-value function which will be used in Actor, and is one of the first attempts to solve the classical problem of Actor-Critic with off-policy learning.  The algorithm, has all the desirable features which we are seeking in this paper, except the fact that the actor-update is not based on the true gradient direction of the proposed objective function with linear value-function approximation~\footnote{See the last page, B. Errata,  in Degris et~al. (2012), with the following link: \url{https://arxiv.org/pdf/1205.4839.pdf}.}, 

In this paper, we solve this problem and propose the first convergent off-policy actor-critic algorithms, Gradient Actor-Critic and Emphatic Actor-Critic, with the following desirable features: online, incremental updating, linear complexity both in terms of memory and per-time-step computation, without adding any new hyper-parameter.  Our approach provides the first systematic solution that extends the classical actor-critic algorithms, originally developed for on-policy learning, to off-policy learning.

\section{RL Setting and Notations}
We consider standard RL setting where the learning agent interacts with a complex, large-scale, environment that has Markov Decision Process (MDP) properties. The MDP model is represented by quadruple $(\S,\A, \mathbf{R},\mathbf{P})$, where $\S$ denotes a finite state set, $\A$ denotes a finite or infinite action set, $\mathbf{R}=(r(s,a,s'))_{s,s' \in \S, a \in \A}$ and $\mathbf{P}=(P(s'|s,a))_{s,s' \in \S, a\in \A}$ denote real-valued reward functions and transition probabilities, respectively; for taking action $a$ from state $s$ and arriving at state $s'$.
 
The RL agent learns from data, generated as a result of interacting with the MDP environment. At time $t$ the agent takes $a_t \sim \pi(.|s_t)$, where $\pi: \S \times \A \rightarrow [0,1]$ denotes policy function, then the environment puts the agent in state $s_{t+1}$ with  the reward of $r_{t+1}$. As a result, the data is generated in the form of a {\em trajectory} and each sample point (fragment of experience) at time $t$ can be represented by tuple $(s_t,a_t,r_t,s_{t+1})$.

The objective of the agent is to find a policy function that has the highest amount of return in the long run; that is the sum of discounted future rewards.  Formally, the objective is to find the optimal policy $\pi^*= \arg \max_{\pi} V^{\pi}(s)$, $\forall s\in \mathcal{S}$, where $V^{\pi}(s) = \mathbb{E}_{\pi}[r_{t+1}+\gamma r_{t+2}+...| s_t=s]$, is called {\em state-value function} under policy $\pi$, with discount factor $\gamma\in [0,1)$, and $\mathbb{E}_{\pi}[.]$ 
represents expectation over random variable (data) generated according to the execution of policy $\pi$. From now on, by value-function, we always mean state-value function and we drop the subscripts from expectation terms.

Let us, represent all value-functions in a single vector, $V \in \Re^{|\S|}$, whose $s^{\rm th}$ element is $V(s)$, $T^{\pi}$ denotes Bellman Operator, defined as $T^{\pi}V \eqdef R^{\pi}+\gamma P^{\pi}V$, where $P^{\pi}$ denotes state-state transition probability matrix, where $P^{\pi}(s'|s) = \sum_{a} \pi(a|s) P(s'|s,a)$. Just for the purpose of clarify, and with a slight abuse of notations, we denote $\sum_{a}$ as the sum (or integral) over actions for both discrete and continuous actions (instead of using the notation $\int da$). Under MDP assumptions, the solution of $V=T^{\pi} V$, is unique and equal to $V=V^{\pi}$. 

In real-world large-scale problems, the number of states is too high. For example, for the case of $19\times19$ Computer Go we have roughly around $10^{170}$ states. This implies, we would need to estimate the value-function $V(s)$ for each state $s$ and without generalization; that is, function approximation, we are subject to  the {\em curse-of-dimensionality}.

To be practical, we would need to do function approximation. To do this, we can represent the state $s$ by a feature vector $\phi(s) \in \Re^{n}$. For example, for the case of $19\times19$ Computer Go we are shrinking a binary feature-vector of size $|\S| = 10^{170}$ (tabular features)  to the size of
$n=10^{6}$, and then we can do linear function approximation (linear in terms of learning parameters and not states). Now the value-function $V^{\pi}(s)$ can be approximated by $V_\theta(s)=\theta^\top \phi(s)$ and our first goal is to learn the parameter $\theta$ such that
$V_{\theta} \approx V^{\pi}$. For our notation, each sample (from the experience trajectory), at time $t$, is perceived in the form of tuple ${(\phi(s_t),a_t,r_{t+1},\phi(s_{t+1}))}_{t \ge 0}$, where for simplicity we have adopted the notation $\phi_t=\phi(s_{t+1})$.
 
Now we would need to do policy improvement given value-function estimate $V_{\theta}$ for a given policy $\pi$. Again to tackle the curse-of-dimensionality, for policy functions, we can parameterize the policy $\pi$ as $\pi_w$, where $w \in \Re^K$, where $K=O(n)$. Finally, through an iterative policy-improvement approach, we would like to converge to $\theta^*$, such that $V_{\theta^*} \approx V^{\pi^*}$.

The Actor-Critic (AC) approach is the only known method that allows us to use state-value functions in the the policy-improvement step, while incrementally updating value functions through Temporal-Difference (TD) learning methods, such as TD($\lambda$) (Sutton, 1998; 1988). This incremental, online update make the AC methods increasingly desirable.

Now let us consider the off-policy scenario, where the agent interacts with a fixed MDP environment and with a fixed {\em behavior} policy $\pi_b$. The agent would like to estimate the value of a given parametrized {\em target} policy $\pi_w$ and eventually find which policy is the best. This type of evaluation of $\pi_w$, from data generated according to a different policy $\pi_b$, is called {\em off-policy evaluation}. By defining the importance-weighting ratio $\rho_t=\frac{\pi_w(a_t|s_t)}{\pi_b(a_t|s_t)}$ and under standard MDP assumptions, statistically we can write the value-function $V^{\pi_w}$, in statistical form of
\begin{eqnarray*}
V^{\pi_w}(s) &=& \mathbb{E}[\rho_t r_{t+1}+\gamma \rho_t \rho_{t+1}  r_{t+2}+...| s_t=s]\\
    &=&  \mathbb{E}[\rho_t \left( r_{t+1}+\gamma  V^{\pi}(s_{t+1} \right)| s_t=s]\\
    &\approx& \theta^\top \phi(s).
\end{eqnarray*}
Again, for large-scale problems, we do TD-learning in conjunction with linear function approximation to estimate the $\theta$ parameters. Just like TD($\lambda$) with linear function approximation, which is used for  on-policy learning, the GTD($\lambda$) algorithm (Maei, 2011) and Emphatic-TD($\lambda$) (Sutton et~al., 2016) can be used for the problem of off-policy learning, with convergence guarantees. 

The question we ask is: What would be the Actor-update; that is, policy-improvement step? Particularly, we would like do a gradient ascent on policy-objective function such that the $w$-weights update, in expectation, exactly follow the direction of the gradient function.

\section{The Problem Formulation}
Degris et al. (2012)  introduced  the following objective function with linear value-function approximation,
\begin{eqnarray}
\label{eq:J}
J(w) = \sum_s d(s) V^{\pi_w}_\theta (s)= \theta^\top \mathbb{E}[\phi_t],
\end{eqnarray}
where $d(s)$ represents the stationary distribution for visiting state $s$ according to the behavior policy $\pi_b$. Please note, $\theta$ is an implicit function of $w$, since $V_{\theta}$ is an approximate estimator for $V^{\pi_w}$.  The goal is to maximize $J(w)$ by updating the policy parameters, iteratively,  along the gradient direction of $\nabla J(w)$, where $\nabla$ is gradient (operation vector) w.r.t policy parameters $w$ 

Degris et al. (2012) Off-Policy Actor-Critic Algorithm, Off-PAC, uses GTD($\lambda$) as Critic, however, the  Actor-update, in expectation, does not follow the true gradient direction of $J(w)$, thus questioning the convergence properties of Off-PAC. (See Footnote 1.)

In this paper, we solve this problem and derive an $O(n)$ convergent Actor-Critic algorithms based on TD-learning for the problem of off-policy learning, whose Actor-update, in expectation, follows the true gradient direction of $J(w)$, thus maximizing the $J(w)$.
%

In the next section, we discuss about the off-policy evaluation step. We discuss about two solutions; that is, GTD($\lambda$) (Maei, 2011) and Emphatic-TD($\lambda$) (Sutton et~.al, 2016). 

\section{Value-Function Approximation: GTD($\lambda$) and Emphatic-TD($\lambda$)-Solutions}
We consider linear function approximation, where $V_\theta=\Phi \theta$ and $\Phi \in \Re^{{|\cal S|} \times n}$ is a feature matrix  whose $s^{\rm th}$ row is the feature vector ${\phi(s)}^\top$. To construct the space in $\Re^{n}$ the column vectors of $\Phi$ need to be linearly independent and we make this assumption throughout the paper. We also assume that the feature vectors $\phi(s)$, $\forall s$ all have a {\em unit feature-value of $1$} in their $n^{\rm th}$ element. This is not needed if we use tabular features, but for the case of function approximation we will see later why this unit feature-value is needed, which is typically used for linear (or logistic) regression problems as the intercept term.

To estimate $V_\theta$, it is  natural to find $\theta$ through minimizing the following mean-square-error objective function: 
\[
{\rm MSE}(\theta)=\sum_s d(s) \left[V^{\pi}(s)-\theta^\top \phi(s) \right]^2,
\]
where $d(s)$, $\forall s \in {\cal S}$, denotes the underlying state-distribution of data, which is generated according to $\pi_b$. The square-error is weighted by $d(s)$, which makes our estimation biased towards the distribution of data generated by following the policy $\pi_b$. The ideal weight would be the underlying stationary distribution under the target policy $\pi$. However, we claim that this is a natural fact in nature, and used in all supervised learning methods. There are ad-hoc methods to re-weight distributions if needed, but they are outside of the scope of this paper.

By minimizing the MSE objective function w.r.t $\theta$, we get
\begin{equation}
\label{eq:mse_solution}
V_\theta = \Pi_D V^{\pi},\quad \Pi_D = \Phi (\Phi^\top D \Phi)^{-1} \Phi^\top D 
\end{equation}
where $\Pi_D$ is projection operator,  and $D=\diag(d(1),\cdot,d({|\cal S|}))$. Also in this paper we assume that $d(s)>0$, $\forall \in \S$, meaning all states should be visited. We call this solution, {\em MSE-Solution}.

Historically, there are two alternative and generic solutions, called GTD($\lambda$)  (Maei, 2011) and Emphatic-TD($\lambda$) (Sutton, et~al., 2016). The MSE-Solution is that is the special case of the two solutions (when $\lambda=1$). Later we discuss about the merits of the two solutions and the reasons behind them.

\paragraph{GTD($\lambda$)-Solution: The Projected Bellman-Equation with Bootstrapping Parameter $\lambda$:}

To find the approximate solution of $V_\theta$ for evaluating the target policy $\pi$, historically, the classical projected Bellman-Equation (Bertsekas, Sutton et al. Maei, 2011) has been used for the problem of off-policy evaluation is, $V_{\theta}=\Pi_{D} T^{\pi, \lambda} V_{\theta}$,
where $\Pi_D$ is the projection operator defined in Eq.~\ref{eq:mse_solution}, and $(T^{\pi,\lambda}V)(s)= \sum_{a,s'} \pi(a|s)P(s'|s,a) [ r(s,a,s')+ \gamma (1-\lambda)V(s')+\lambda (T^{\pi,\lambda}V)(s') ]$. We can convert the above Matrix-vector products into the following statistical form (Maei, 2011):
%
\begin{equation}
\label{eq:tdl}
\mathbb{E}[\rho_t \delta_t e_t ]=0,
\end{equation}
\[
\delta_t=  r_{t+1}+\gamma \theta^\top \phi_{t+1}-\theta^{\top} \phi_{t+1},~~ \rho_t = \frac{\pi_w(a_t|s_t)}{\pi_b(a_t|s_t)},
\]
\[
e_t = \phi_t+\gamma \lambda \rho_{t-1} e_{t-1},
\]
where $\phi_t \eqdef \phi(s_t)$. (Note, following GTD($\lambda$) in Maei (2011), here we have done a small change of variable for $e$
without changing the solution.)

The GTD($\lambda$) is used to find the solution of Eq.~\ref{eq:tdl} with convergence guarantees. As such we call the fixed-point, GTD($\lambda$)-Solution. The GTD($\lambda$) main update is as follows:
\begin{equation}
\theta_{t+1}=\theta_t+\alpha_t \rho_t \left[ \delta_t e_t - \gamma (1-\lambda) \phi_{t+1} (e_t^\top u_t) \right],
\end{equation}
where $\alpha_t$ is step-size at time $t$, $u$ represents a secondary set of weights, updated according to $u_{t+1}=u_t+\alpha_t^{u} \left[\rho_t \delta_t e_t-(u_t^\top \phi_t) \phi_t \right]$, where $\alpha_t^u$ is a step-size at time $t$.

There are a few points to make regarding the solution of GTD($\lambda$): 
\begin{itemize}
\item  For the case of tabular features, or features that span the state-space, the solution is independent of $\lambda$ value and is equal to the true solution $V^{\pi}$.
\item For the case of linear function approximation the solution can depend on $\lambda$.
\item For $\lambda=1$ the solution is equivalent to MSE-solution; that is $\Pi_D V^{\pi}$. In addition, GTD(1)-update would not need a second set of weights and step-size, making it very simple, as we can see from its main update.
\item For the case of on-policy learning, GTD(1) and TD(1) are identical.
\end{itemize}

\paragraph{Emphatic-TD($\lambda$)-Solution:}

An alternative solution to the problem of off-policy prediction and its solution with function approximation is an {\em Emphatic} version of Projected Bellman-Equation, developed by Sutton et~al. (2016), in projection operator, $\Pi_D$, now we have $D \leftarrow DF$, where $F$ is an emphatic (positive definite) diagonal matrix. Later we will discuss about the properties of matrix $F$. In statistical form, the solution satisfies,
\begin{equation}
\label{eq:emphatic}
\mathbb{E}[\rho_t \delta_t e_t]=0, \quad e_t= m_t \phi_t+\gamma \lambda \rho_{t-1} e_{t-1},
\end{equation}
\[
m_t = 1 +\gamma \rho_{t-1} (m_{t-1}-\lambda),
\]
where $m_{-1}=\lambda$ and $m_t$ remains always strictly positive. (Please note, we have combined $m_t=\lambda+(1-\lambda) f_t$ and $f_t=1+\gamma \rho_{t-1} f_{t-1}$ used in Sutton et~al. (2016). Also note, since the form of the update for both GTD($\lambda$) and Emphatic-TD($\lambda$) looks the same, due to simplicity, we have used the same notation for the eligibility trace vector, $e$.)

The Emphatic-TD($\lambda$) update is as follows:
\begin{equation}
\theta_{t+1}=\theta_t+\alpha_t \rho_t \delta_t e_t,
\end{equation}
where $e_t$ follows Eq.~\ref{eq:emphatic} update.

Here, we make a few points regarding the solution of Emphatic-TD($\lambda$):
\begin{itemize}
\item For the case of tabular features, or features that can span the state-space, the Emphatic-TD($\lambda$) and GTD($\lambda$) both converge to the true solution $V^{\pi}$.
\item For the case of linear function approximation, both Emphatic-TD($\lambda$) and GTD($\lambda$) solutions depend on $\lambda$
but they may differ: 
\begin{itemize}
\item For the case on-policy learning  both solutions  are the same, because $F$ would become a constant diagonal matrix
\item  For the case of off-policy learning, the solutions will differ, and still it is not clear which solution has a better solution advantage.
\end{itemize}
\item  Both Emphatic-TD(1) and GTD(1) have the same update (identical, as we have $m_t=1$ for all $t$) and converge to the MSE-Solution.
\end{itemize}
It is worth to discuss MSE-Solution here as both Emphatic-TD(1) and GTD(1) become identical and converge to MSE-Solution.  The question is why not using MSE-Solution, with $\lambda=1$, and why to have such a variety of solutions based on bootsrapping parameter $\lambda$? The truth is, it is widely accepted that the main reason bias-variance trade-off. In fact MSE-Solution is natural solution, but when $\lambda=1$, $e$ traces become large and cuase a huge variance around the fixed-point, thus may result an inferior solution. (This is also known as Monte-Carlo solution in Sutton \& Barto, 1998.)  However, there has been little investigation on  how by reducing the variance of $e$ traces one can reduce the overall variance and thus converge to a quality solution. We did some simple experiments and by normalizing the eligibility trace vector, e.g. $e/|e|$, we were able to get superior results as is shown in Fig.~\ref{fig:mse}. (For the details of the exepriment see  the 19-state random walk in Sutton \& Barto, 1998.) 

\begin{figure}[h!]
\centering
  \includegraphics[scale=0.23]{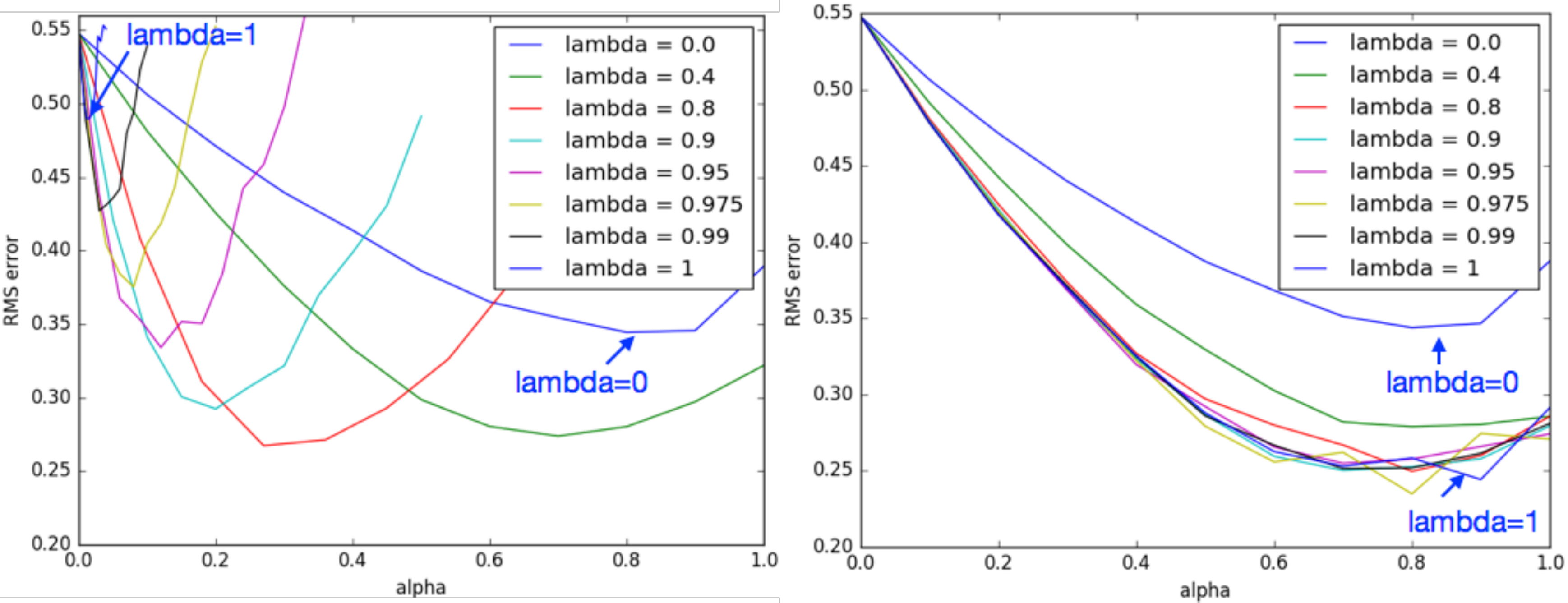}
  \caption{Parameter studies: RMS values (vertical axis) with various step-size, alpha (x-axis) and $\lambda$ values. The right panel normalizes the eligibility traces which leads to the best value for $\lambda=1$, which is also slightly better than the best value (with $\approx \lambda=0.8$) in the left pannel. }
  \label{fig:mse}
\end{figure}

\section{The Gradient Direction of $J(w)$ }
In this section, we explicitly derive the exact gradient direction of the function $J(w)$ in Eq.~\ref{eq:J} w.r.t. policy parameter $w$. 

As we can see the GTD($\lambda$) solution in Eq.~\ref{eq:tdl} and Emphatic-TD($\lambda$) solution in Eq.~\ref{eq:emphatic} look similar. (Their solution is different due to a different update for $e$.) Thus, we provide the same form of gradient of $J(w)$ for both, as follows: First, we compute the gradient of $J(w)$ from Eq.~\ref{eq:J}, $\nabla J(w)=\frac{\partial {\theta}^\top}{\partial w}\mathbb{E}[\phi]$.  Now, to obtain the matrix, $\frac{\partial {\theta}^\top}{\partial w}$, we transpose Eq.~\ref{eq:tdl} (Eq.~\ref{eq:emphatic}) and then take its gradient as follows ($\nabla$ is column vector operator):
\begin{eqnarray}
\label{eq:td_grad}
0 &=& \nabla \mathbb{E}[\rho_t \delta_t e_t^\top]  \\
&=& \underbrace{\mathbb{E}[ \nabla \rho_t \delta_t e_t^\top]}_{\circled{1}} +\underbrace{\mathbb{E}[\rho_t \nabla \delta_t e_t^\top]}_{\circled{2}}+\underbrace{\mathbb{E}[\rho_t  \delta_t \nabla e_t^\top]}_{\circled{3}}, \nonumber
\end{eqnarray}
where we can show,
\[
\circled{1}= \mathbb{E}[\rho_t \delta_t \nabla \log{\pi_w(a_t|s_t)}e_t^\top],
\]
\[
\circled{2}=\frac{\partial {\theta}^\top}{\partial w}\mathbb{E}[\rho_t \left( \gamma \phi_{t+1}-\phi_t \right)e_t^\top].
\]
Putting all together and solving for the matrix  $\frac{\partial {\theta}^\top}{\partial w}$ we get,
\begin{eqnarray}
\label{eq:grad_theta}
\frac{\partial {\theta}^\top}{\partial w}&=&\left(\circled{1}+ \circled{3}\right) \mathbb{E}[\rho_t \left(\phi_t-\gamma \phi_{t+1} \right){e_t}^\top ]^{-1} \nonumber\\
&=&\mathbb{E}[\rho_t  \delta_t \left(\nabla \log{\pi(a_t|s_t)}  e_t^\top+\nabla e_t^\top \right)] {A(\lambda)^{\top}}^{-1},\nonumber \\
\end{eqnarray}
where
\begin{equation}
{A(\lambda)}^\top \eqdef \mathbb{E}[\rho_t \left(\phi_t-\gamma \phi_{t+1} \right){e_t}^\top ].
\end{equation}

$A(\lambda)$ matrix is invertible because we have assumed that the column vectors of $\Phi$ are linearly independent. This is a realistic assumption to construct a space in $\Re^{n}$ where the approximate solution is located.

Now we substitute the equalities in the Eq.~\ref{eq:grad_theta} to obtain $\frac{\partial {\theta}^\top}{\partial w}$, which will be used in $\nabla J(w)=\frac{\partial {\theta}^\top}{\partial w}\mathbb{E}[\phi_t]$, to obtain the exact gradient. To do this, first let us consider the following definitions:
\begin{equation}
\label{eq:etadef}
\eta(\lambda) \eqdef {A(\lambda)^{\top}}^{-1} \mathbb{E}[\phi_t],
\end{equation}
\begin{equation}
\label{eq:flt}
f_t^\lambda \eqdef e_t^\top \eta(\lambda),
\end{equation}
\begin{equation}
\label{eq:fl}
f^{\lambda}(s) \eqdef  \mathbb{E}[f_t^\lambda|s_t=s]=\mathbb{E}[e_t|s_t=s]^\top \eta(\lambda),
\end{equation}
and  $f^\lambda \in \Re^{|\S|}$, and also we define the diagonal matrix $F$ whose diagonal elements are vector $f$. Using definitions(\ref{eq:etadef},\ref{eq:fl}), and given the fact that conditioning on $s_t=s$ we get $s_{t-1}$ and $s_{t+1}$ independent, then we have the following equations: First, Eq.~\ref{eq:etadef} can be written as:

\begin{equation}
\label{eq:fleq}
\mathbb{E}[\rho_t \left(\phi_t-\gamma \phi_{t+1} \right) f^\lambda(s_t) ]=\mathbb{E}[\phi_t],
\end{equation}
and also we have,
\begin{eqnarray}
\label{eq:grad_general}
\nabla J(w)&=& \frac{\partial {\theta}^\top}{\partial w} \mathbb{E}[\phi_t] \nonumber \\
&=& \mathbb{E}[\rho_t \delta_t \left( f_t^\lambda \nabla \log{\pi_w(a_t|s_t)}  +\nabla {e_t}^\top \eta(\lambda) \right) ]. \nonumber  \\
\end{eqnarray}
The above gradient is the exact gradient direction in statistical form.

The most important question is to identify the values of  $f_t^\lambda$ and $\nabla {e_t}^\top \eta(\lambda)$, which would be our next task.  
We will show that these terms are state dependent and can not be ignored for the case of off-policy learning, unless the problem is on-policy.

\subsection{On-Policy scenario:} 

\begin{lem}\label{lem:on_policy}(On-Policy GTD($\lambda$) and Emphatic-TD($\lambda$) Solutions) For the problem of on-policy, where $\rho_t=1$, $\forall t$, the  Emphatic-TD($\lambda$),GTD($\lambda$), and TD($\lambda$) solutions are identical.
\end{lem}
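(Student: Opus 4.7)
The plan is to show that when $\rho_t \equiv 1$, the three fixed-point equations defining TD($\lambda$), GTD($\lambda$), and Emphatic-TD($\lambda$) all reduce to the same condition on $\theta$. First I would simply substitute $\rho_t = 1$ into the GTD($\lambda$) fixed-point equation~\eqref{eq:tdl}: the importance ratios disappear, the trace recursion collapses to $e_t = \phi_t + \gamma \lambda e_{t-1}$, and the equation $\mathbb{E}[\delta_t e_t]=0$ is precisely the standard on-policy TD($\lambda$) fixed-point condition. This immediately gives that GTD($\lambda$) and TD($\lambda$) share the same solution in the on-policy case.

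The remaining step is to identify the Emphatic-TD($\lambda$) solution with the GTD($\lambda$) one. Setting $\rho_t = 1$ in the scalar recursion $m_t = 1 + \gamma \rho_{t-1}(m_{t-1}-\lambda)$ turns it into the deterministic contraction $m_t = (1-\gamma\lambda) + \gamma m_{t-1}$, which, since $\gamma<1$, admits the unique stable fixed point $m^\star = (1-\gamma\lambda)/(1-\gamma) > 0$. In the stationary regime in which the expectations defining the fixed-point equations are evaluated, $m_t \equiv m^\star$, so the emphatic trace recursion in~\eqref{eq:emphatic} becomes $e_t = m^\star \phi_t + \gamma \lambda\, e_{t-1}$. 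Thus the emphatic trace is exactly $m^\star$ times the GTD($\lambda$) trace, and the emphatic fixed-point equation $\mathbb{E}[\rho_t \delta_t e_t] = 0$ is equivalent to the GTD($\lambda$) equation because the positive scalar $m^\star$ factors out. Chaining the two identifications yields the claimed coincidence of all three solutions.

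The only delicate point I would flag is the replacement of $m_t$ by its limit $m^\star$ inside the expectation. Since the fixed-point equation describes steady-state behavior and $m_t$ is driven by a deterministic contraction when $\rho_t=1$, the transient decays geometrically and does not affect the fixed point; this can be made rigorous either by viewing the expectation as a long-run (Ces\`aro) average under the induced stationary distribution, or, in the matrix-vector formulation, by observing that $F = m^\star I$ so that $\Pi_{DF} = \Pi_D$ and the emphatic projected Bellman equation reduces on the nose to the GTD($\lambda$) projected Bellman equation.
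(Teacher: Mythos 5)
Your proposal is correct and follows essentially the same route as the paper: both observe that GTD($\lambda$) with $\rho_t=1$ is literally the TD($\lambda$) fixed point, then compute the limit $m^\star=(1-\gamma\lambda)/(1-\gamma)$ of the $m_t$ recursion and factor this positive scalar out of the emphatic trace (the paper does this by rescaling $\hat e_t = e_t/m_\infty$, which is the same step). Your extra remark justifying the replacement of $m_t$ by its limit (and the observation that $F=m^\star I$ forces $\Pi_{DF}=\Pi_D$) is a welcome tightening of a point the paper glosses over, but it does not change the argument.
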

\begin{proof}
From Eq.~\ref{eq:tdl}, it is clear that on-policy GTD($\lambda$) and TD($\lambda$) have the same solution. For the case of emphatic-TD($\lambda$), when $\rho_t=1$, $\forall t$, we get $\lim_{t \rightarrow \infty} m_t=(1-\gamma \lambda)/(1-\gamma)$. Thus we can use the convergent value, $m_{\infty}$ in the update for $e_t$; that is, $e_t=m_{\infty} \phi_t+\gamma \lambda e_{t-1}$. Let us divide the Eq.~\ref{eq:emphatic} by $m_{\infty}$, and define $\hat{e}_t = e_t/m_{\infty}$, then we get $\hat{e}_t= \phi_t+\gamma \lambda \hat{e}_{t-1}$, where $\hat{e}_{-1}=0$. Thus the expectation term becomes identical to Eq.~\ref{eq:tdl}, finishing the proof.
\end{proof}

\begin{thm}\label{thm:on_policy}($\nabla J(w)$ for the Problem of On-Policy  (aka TD($\lambda$)-Solution))
\begin{eqnarray}
(1-\gamma \lambda)\nabla J(w) = \mathbb{E}[ \delta_t \log{\pi_w(a_t|s_t)} ],
\end{eqnarray}
\end{thm}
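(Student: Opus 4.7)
The plan is to specialize the general gradient identity in Eq.~\ref{eq:grad_general} to the on-policy regime, using Lemma~\ref{lem:on_policy} to collapse the three solution variants (GTD($\lambda$), Emphatic-TD($\lambda$), and TD($\lambda$)) into a common TD($\lambda$) fixed point. Under on-policy evaluation, $\rho_t = 1$ at the evaluation point, but it is essential to remember that $\nabla \rho_t = \nabla \log \pi_w(a_t|s_t)$ is generally nonzero, so both the ``$f_t^\lambda \nabla \log \pi_w$'' and the ``$\nabla e_t^\top \eta$'' contributions in Eq.~\ref{eq:grad_general} must be retained.

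The first step is to unroll $\nabla e_t^\top \eta(\lambda)$ using the trace recursion. Since $e_t = \phi_t + \gamma\lambda \rho_{t-1} e_{t-1}$ with $\rho_{t-1}=1$ on-policy, differentiating and using $\nabla \rho_{t-1} = \nabla \log \pi_w(a_{t-1}|s_{t-1})$ gives
\[
\nabla e_t^\top \eta(\lambda) = \gamma\lambda\, \nabla \log \pi_w(a_{t-1}|s_{t-1})\, f_{t-1}^\lambda + \gamma\lambda\, \nabla e_{t-1}^\top \eta(\lambda).
\]
Iterating this yields a geometric sum over past score-function terms weighted by $(\gamma\lambda)^{t-k} f_k^\lambda$. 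Together with the $k=t$ contribution $\mathbb{E}[\delta_t f_t^\lambda \nabla \log \pi_w(a_t|s_t)]$, this consolidates Eq.~\ref{eq:grad_general} into
\[
\nabla J(w) = \sum_{k=0}^{t} (\gamma\lambda)^{t-k}\, \mathbb{E}\!\left[\delta_t\, f_k^\lambda\, \nabla \log \pi_w(a_k|s_k)\right].
\]

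The second step is to evaluate $f^\lambda(s)$ in steady state. Using the defining identity Eq.~\ref{eq:fleq}, the unit-feature assumption $\phi(s)_n=1$, and the on-policy stationarity identity $\mathbb{E}[\phi_t]=\mathbb{E}[\phi_{t+1}]$, the scalar $f^\lambda(s)$ should reduce to a state-independent constant. Plugging this constant back in and invoking Markov shift-stationarity together with the TD($\lambda$) orthogonality $\mathbb{E}[\delta_t e_t]=0$ to collapse cross-time correlations, the geometric series $\sum_{m\ge 0}(\gamma\lambda)^m=\tfrac{1}{1-\gamma\lambda}$ produces the claimed prefactor and leaves only $\mathbb{E}[\delta_t \nabla \log \pi_w(a_t|s_t)]$ on the right-hand side.

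The main obstacle is twofold: (i) showing that $f^\lambda(s)$ is indeed state-independent, since a single scalar constraint coming from the unit feature does not by itself pin down the $n$-vector $\eta(\lambda)$, so the stationarity identity $\mathbb{E}[\phi_t]=\mathbb{E}[\phi_{t+1}]$ (special to on-policy) must supply the missing structure in Eq.~\ref{eq:fleq}; and (ii) tracking the geometric bookkeeping so that exactly the factor $(1-\gamma\lambda)$, rather than some other closely related quantity, emerges. A subtle point is that the on-policy TD($\lambda$) orthogonality only holds in expectation against the full trace vector, $\mathbb{E}[\delta_t e_t]=0$, not pointwise against each individual past feature, so the cancellation of cross-time $\mathbb{E}[\delta_t \nabla \log \pi_w(a_k|s_k) f_k^\lambda]$ terms for $k<t$ must be argued via Markov shift-stationarity rather than a naive conditional-expectation argument.
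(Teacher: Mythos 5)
Your setup is right in two respects: you correctly specialize Eq.~\ref{eq:grad_general}, and you correctly observe that $\nabla \rho_{t-1} = \nabla \log \pi_w(a_{t-1}|s_{t-1})$ does not vanish at the on-policy point, so the $\nabla e_t^\top \eta(\lambda)$ contribution cannot simply be dropped; your unrolling of that term into $\sum_{j\ge 0}(\gamma\lambda)^j\,\mathbb{E}[\delta_t\, f^\lambda_{t-j}\,\nabla\log\pi_w(a_{t-j}|s_{t-j})]$ is also algebraically correct. However, your mechanism for producing the factor $(1-\gamma\lambda)^{-1}$ is not the paper's, and the step it rests on is a genuine gap. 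The paper obtains the entire prefactor from the single $j=0$ term: it plugs a constant ansatz into Eq.~\ref{eq:fleq}, uses the on-policy stationarity $d^\top P^\pi = d^\top$ to conclude $f^\lambda(s)\equiv \tfrac{1}{1-\gamma\lambda}$ for all $s$, factors $\mathbb{E}[\delta_t f_t^\lambda \nabla\log\pi_w(a_t|s_t)]$ by conditioning on $s_t$, and invokes Lemma~\ref{lem:on_policy} to identify the three solutions. You instead leave the constant unevaluated ("should reduce to a state-independent constant") and attribute $(1-\gamma\lambda)^{-1}$ to the geometric series over the cross-time $j\ge 1$ terms. These two accounts are incompatible: if $f^\lambda\equiv c$ and every cross-time term also contributed $c\,\mathbb{E}[\delta_t\nabla\log\pi_w(a_t|s_t)]$, the series would yield $\tfrac{c}{1-\gamma\lambda}$, i.e., the prefactor would be counted twice. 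You must commit to computing $c$ explicitly, which is the actual content of the theorem and the part of the paper's proof you omit.

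The concrete gap is the "collapse cross-time correlations" step. For $j\ge 1$, shift-stationarity rewrites $\mathbb{E}[\delta_t\, f^\lambda_{t-j}\,\nabla\log\pi_w(a_{t-j}|s_{t-j})]$ as $\mathbb{E}[\delta_{t+j}\, f^\lambda_{t}\,\nabla\log\pi_w(a_{t}|s_{t})]$, a correlation between the score at time $t$ and the TD error $j$ steps in the \emph{future}. Nothing you cite makes this equal to the $j=0$ term or makes it vanish: the fixed-point identity $\mathbb{E}[\delta_t e_t]=0$ is an orthogonality of the TD error against the backward-looking feature trace, and it gives no control over forward correlations with score functions. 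You yourself flag that a naive conditional-expectation argument fails here, but you offer no replacement argument — and that is precisely where a proof is required. Your observation that $\nabla\rho\neq 0$ on-policy is a legitimate and useful challenge to the paper's own terse treatment of the $\nabla e_t^\top\eta(\lambda)$ term, but raising the issue is not the same as resolving it, and as written your derivation does not establish the stated identity.
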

\begin{proof}
The term $f^{\lambda}(s_t)$ satisfying in  Eq.~\ref{eq:fleq}, can be replaced by $f_t^\lambda$, which by definition is equivalent to $e_{t}^\top \eta(\lambda)$ and since the matrix $A$ is invertible (due to the assumption of linearly independent column feature vectors of $\Phi$), then $\eta$ solution is unique, which implies uniqueness of $f^{\lambda}(s)$ for a given $s$. Now, we show that  $f^{\lambda}=\frac{1}{1-\gamma \lambda} \one$, which is constant $\forall s$. Using $d^{\top}P^\pi=d^{\top}$ in Eq.~\ref{eq:fleq}, we get
\begin{flalign*}
&\mathbb{E}[\rho_t \left(\phi_t-\gamma \phi_{t+1} \right) f^\lambda(s) ] \\
&=\sum_{s,s'} d(s)\left( \phi(s)-P^\pi(s'|s) \phi(s')\right)\frac{1}{1-\gamma \lambda}   \\
&=\frac{1}{1-\gamma \lambda}\left[\sum_s d(s) \phi(s)- \sum_{s'} (\sum_s d(s)P^\pi(s'|s)) \phi(s') \right],\\
&= \mathbb{E}[\phi_t],
\end{flalign*}
where we have used $\sum_s d(s)P^\pi(s'|s)=d(s')$. Thus $f^{\lambda}(s)=\frac{1}{1-\gamma \lambda}$ is the solution for all $s$, and using Lemma (\ref{lem:on_policy}) , we finish the proof.
\end{proof}

\subsection{Off-Policy scenario:} 
Unlike on-policy, for the problem of off-policy $f^\lambda(s)$ is state $s$ dependent and is not constant (see the proof of Theorem(\ref{thm:on_policy})). Estimating the value of  $f^\lambda(s)$ will require estimating the parameter $\eta(\lambda)$
which comes with complexities in terms of computations. This is mainly due to the fact that we don't have on-policy criteria
$d^{\top}P^\pi=d^{\top}$. However, when $\lambda=1$; that is, for the case of GTD(1) solution, which is equivalent to MSE solution, we would be able to find the exact value of 
\[
f(s) \eqdef f^{\lambda=1}(s), ~ \forall s \in \S,
\]
which would enable us to do $O(n)$ sampling from the true gradient of $J(w)$. This is one of our main contributions in this paper.

\section{The Gradient Direction of $J(w)$ with GTD(1)-Solution}

\begin{lem}\label{lem:flvalue}(Value of $f$ for the Problem of Off-Policy with GTD(1)-Solution) 
For the problem of off-policy the value of $f$, defined as $f^{\lambda=1}$, satisfying in Eq.~\ref{eq:fleq} is
 
\begin{equation}
f(s) =  \lim_{t \rightarrow \infty} \mathbb{E}[f_t|s_t=s], \forall s
\end{equation}
where {\rm where}  $f_t  =1+\gamma \rho_{t-1} f_{t-1}$, $f_{-1}=0$, $t\ge 0$, $\rho_t=\frac{\pi(a_t|s_t)}{\pi_b(a_t|s_t)}$, and
all the elements of $\eta(1)$ are zero except the $n^{\rm th}$ element which is $1$. 
\end{lem}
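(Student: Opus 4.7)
The plan is to prove the stronger statement that $\eta(1)$ equals the $n$-th standard basis vector; this is precisely the second clause of the lemma, and it immediately yields the first. Indeed, once this is known, $f_t^{\lambda=1} = e_t^\top \eta(1) = e_t[n]$, and the unit-feature assumption $\phi_t[n] \equiv 1$ applied to the GTD(1) trace recursion $e_t = \phi_t + \gamma \rho_{t-1} e_{t-1}$ immediately gives $e_t[n] = 1 + \gamma \rho_{t-1} e_{t-1}[n]$, which (with $e_{-1}=0$) is exactly the claimed scalar recursion for $f_t$. Since $A(1)$ is invertible by assumption, $\eta(1)$ is uniquely determined by $A(1)^\top \eta(1) = \mathbb{E}[\phi_t]$, so it suffices to verify that this basis vector is a solution, i.e., that $\mathbb{E}[\rho_t(\phi_t - \gamma \phi_{t+1}) f_t] = \mathbb{E}[\phi_t]$.

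To verify this identity I would first reduce it to a state-space equation by conditioning on $s_t$. Because $f_t$ is measurable with respect to the past $\sigma(s_0, a_0, \ldots, s_t)$ while $(\rho_t, \phi_{t+1})$ depends only on $(a_t, s_{t+1})$, the Markov property makes these two factors conditionally independent given $s_t$; together with the importance-sampling identity $\mathbb{E}[\rho_t \phi_{t+1} \mid s_t = s] = \sum_{s'} P^\pi(s'|s)\phi(s')$, this rewrites the target equation as
\begin{equation*}
\sum_s d(s)\bigl(\phi(s) - \gamma (P^\pi \phi)(s)\bigr) f(s) \;=\; \sum_s d(s)\phi(s),
\end{equation*}
with $f(s) = \lim_t \mathbb{E}[f_t \mid s_t = s]$. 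The core step is to derive the emphatic stationarity relation $m = d + \gamma (P^\pi)^\top m$ for the weighted vector $m(s) \eqdef d(s) f(s)$. This follows by applying Bayes' rule to $\mathbb{E}[\rho_{t-1} f_{t-1} \mid s_t = s']$ in the stationary regime: the conditional density $d(s)\pi_b(a|s)P(s'|s,a)/d(s')$ of $(s_{t-1}, a_{t-1})$ given $s_t = s'$ absorbs $\rho_{t-1}$, turning $\pi_b$ into $\pi$ and producing $P^\pi(s'|s)$, while the Markov property decouples $f_{t-1}$ from $s_t$ given $s_{t-1}$ so that $\mathbb{E}[f_{t-1} \mid s_{t-1} = s, s_t = s'] = f(s)$.

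Once the stationarity relation $m = d + \gamma (P^\pi)^\top m$ is in hand, the defining identity follows by a one-line telescoping rearrangement: the $\gamma$-term on the left-hand side equals $\sum_{s'} \phi(s')(m(s') - d(s'))$, so the $m$-pieces cancel and only $\sum_{s'} d(s')\phi(s') = \mathbb{E}[\phi_t]$ survives. Existence of the limit $\lim_t \mathbb{E}[f_t \mid s_t = s]$ is not a concern, since $m = (I - \gamma (P^\pi)^\top)^{-1} d = \sum_{k\ge 0} \gamma^k ((P^\pi)^\top)^k d$ converges absolutely whenever $\gamma < 1$ (using that $P^\pi$ is stochastic). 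The main obstacle I anticipate is the Bayes-plus-Markov computation underlying the stationarity relation, which is where the off-policy character of the problem enters and where it is easy to conflate expectations under $\pi_b$ with those under $\pi$; everything else reduces to careful algebraic bookkeeping.
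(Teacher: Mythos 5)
Your proposal is correct and follows essentially the same route as the paper: invoke uniqueness (invertibility of $A(1)$), identify $\eta(1)$ with the $n$-th basis vector via the unit-feature assumption so that $f_t = e_t^\top \eta(1)$ obeys the scalar recursion, and verify the defining equation using the relation $Df = (I-\gamma (P^\pi)^\top)^{-1}d$ followed by the telescoping cancellation. The only difference is that you sketch the Bayes-rule derivation of that stationarity relation inline, whereas the paper states it and defers the computation to the supplementary material.
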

\begin{proof}
Just like the proof of Theorem (\ref{thm:on_policy}),  let us assume that $f(s)$  is equal to $ \lim_{t \rightarrow \infty} \mathbb{E}[f_t|s_t=s]$. Due to having a unique solution, if it satisfies in Eq.~\ref{eq:fleq} then it must be the unique solution. Please note that here, we have done a slight abuse of notation and have used $f(s)$ which by definition is $\mathbb{E}[f_t^{\lambda=1}|s_t=s]$ and also for the iteration we have used $f_t$ with subscript $t$. These are two different variables and should not be mistaken, but for simplicity we have adopted this notation in the paper.

To do the proof, first, we can show that, the product of the diagonal matrix $D$ (whose diagonal element is $d$) times $f$ is $Df=(1-{P^{\pi}}^\top)^{-1}d$ (See supplementary materials). Let us define the diagonal matrix $F$ with diagonal elements of $f$, where $F {\bf 1} = f$, we have 
\begin{flalign*}
&\mathbb{E}[\rho_t \left(\phi_t-\gamma \phi_{t+1} \right) f(s) ] \\
&=\sum_{s,s'} d(s)f(s) \left( \phi(s)-P^\pi(s'|s) \phi(s')\right)   \\
&= \left[(I-\gamma P^\pi)\Phi \right]^\top DF {\bf 1} \\
&=\Phi^\top (1-\gamma {P^{\pi}}^\top)(1-\gamma{P^{\pi}}^\top)^{-1}d \\
&=\sum_s d(s) \phi(s)= \mathbb{E}[\phi_t].
\end{flalign*}
Again, given the fact that $\eta(1)$ has a unique solution, and the $n^{\rm th}$ element of all the feature vectors $\phi(s)$, $\forall s$, have a unit value of $1$, we can see that the $n^{\rm th}$ element of $e_t$ would be equivalent to $f_t$ and thus $\eta(1)$ with all zero elements except the $n^{\rm th}$ element with the value of $1$ satisfies in Eq.~\ref{eq:fleq}.
Thus, finishing the proof.
\end{proof}

\begin{thm}\label{thm:J_lambda1}($\nabla J(w)$ for the Problem of Off-Policy with GTD(1)-Solution (aka MSE-Solution))
\begin{eqnarray}
\label{thm:grad_trueJ}
\nabla J(w) = \lim_{t \rightarrow \infty} \mathbb{E}[ \rho_t  \delta_t \psi_t ],
\end{eqnarray}
where $\psi_t = f_t \nabla \log{\pi_w(a_t|s_t)}+ \gamma \rho_{t-1} \psi_{t-1}$,$f_t  =1+\gamma \rho_{t-1} f_{t-1}$, $\psi_{-1}$ is zero vector, $f_{-1}=0$  $\rho_t  =\frac{\pi(a_t|s_t)}{\pi_b(a_t|s_t)}$.
\end{thm}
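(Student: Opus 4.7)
The plan is to specialize Eq.~\ref{eq:grad_general} to $\lambda = 1$ and simplify the two resulting terms using Lemma~\ref{lem:flvalue}. First I would invoke Lemma~\ref{lem:flvalue}: at $\lambda=1$ the vector $\eta(1)$ has all entries zero except the $n$-th, which equals $1$. Because every $\phi(s)$ has unit $n$-th coordinate and because the GTD($\lambda$) trace at $\lambda=1$ obeys $e_t = \phi_t + \gamma \rho_{t-1} e_{t-1}$, the $n$-th coordinate of $e_t$ satisfies exactly the scalar recursion $f_t = 1 + \gamma \rho_{t-1} f_{t-1}$ with $f_{-1}=0$. Hence $f_t^{\lambda=1} = e_t^\top \eta(1) = f_t$ almost surely, and Lemma~\ref{lem:flvalue} tells us its conditional expectation converges to the stationary $f(s)$ as $t\to\infty$.

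The delicate piece is $\nabla e_t^\top \eta(1)$. Differentiating the trace recursion w.r.t.\ $w$, using $\nabla \phi_t = 0$ together with the log-derivative identity $\nabla \rho_{t-1} = \rho_{t-1}\,\nabla \log \pi_w(a_{t-1}|s_{t-1})$, I would obtain
\[
\nabla e_t^\top = \gamma \rho_{t-1}\,\nabla \log \pi_w(a_{t-1}|s_{t-1})\,e_{t-1}^\top + \gamma \rho_{t-1}\,\nabla e_{t-1}^\top.
\]
Right-multiplying by $\eta(1)$ and recalling $e_{t-1}^\top \eta(1) = f_{t-1}$ yields the scalar-vector recursion
\[
\nabla e_t^\top \eta(1) = \gamma \rho_{t-1} f_{t-1}\,\nabla \log \pi_w(a_{t-1}|s_{t-1}) + \gamma \rho_{t-1}\,\nabla e_{t-1}^\top \eta(1),
\]
with $\nabla e_{-1}^\top \eta(1) = 0$.

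Now I would define $\widetilde{\psi}_t := f_t \nabla \log \pi_w(a_t|s_t) + \nabla e_t^\top \eta(1)$, which is exactly the vector contracted against $\rho_t \delta_t$ inside the expectation of Eq.~\ref{eq:grad_general} at $\lambda=1$. Substituting the recursion from the previous paragraph and recognising that the leftover quantity $f_{t-1}\,\nabla \log \pi_w(a_{t-1}|s_{t-1}) + \nabla e_{t-1}^\top \eta(1)$ is just $\widetilde{\psi}_{t-1}$, a one-line induction gives $\widetilde{\psi}_t = f_t \nabla \log \pi_w(a_t|s_t) + \gamma \rho_{t-1}\,\widetilde{\psi}_{t-1}$, i.e.\ $\widetilde{\psi}_t$ satisfies the same recursion and initial condition as the $\psi_t$ in the statement. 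Plugging back into Eq.~\ref{eq:grad_general} and taking $t\to\infty$ (needed because the recursions start from $t=-1$ rather than from the stationary regime) delivers the claimed formula.

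The main obstacle I expect is purely bookkeeping in the middle step: carefully differentiating the recursive trace, keeping transpose conventions straight, and aligning the time-shifted log-derivative at $(s_{t-1},a_{t-1})$ with the $f_{t-1}$ factor so that the compact $\psi_t$ recursion falls out cleanly. Once $\eta(1)$ is identified from Lemma~\ref{lem:flvalue}, everything else is a direct specialization of identities already established.
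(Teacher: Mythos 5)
Your proposal is correct and follows essentially the same route as the paper's proof: specialize Eq.~\ref{eq:grad_general} to $\lambda=1$, use Lemma~\ref{lem:flvalue} to identify $\eta(1)$ as the $n$-th standard basis vector so that $e_t^\top\eta(1)=f_t$, and then differentiate (your differentiation of the trace recursion followed by contraction with $\eta(1)$ is the same computation as the paper's $\nabla f_t = \gamma\rho_{t-1}\left(f_{t-1}\nabla\log\pi_w(a_{t-1}|s_{t-1})+\nabla f_{t-1}\right)$) to obtain the $\psi_t$ recursion. The only cosmetic difference is that the paper additionally spells out the conditional-independence step that lets $\mathbb{E}[f_t^\lambda|s_t=s]$ be replaced by $f(s)$ inside the first term, which your identification of $f_t^{\lambda=1}$ with the scalar recursion covers implicitly.
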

\begin{proof}
The first term of $\nabla J(w)$ in Eq.~\ref{eq:grad_general}; that is $\mathbb{E}[\rho_t \delta_t  f_t^\lambda \nabla \log{\pi_w(a_t|s_t)}]$ can be written as $\sum_s d(s) \mathbb{E}[f_t^\lambda|s_t=s] \mathbb{E}[\rho_t \delta_t  \nabla \log{\pi_w(a_t|s_t)}|s_t=s]$ since, due to MDP properties, conditioning on $s_t=s$ makes $f_t^{\lambda=1}$ independent of $s_{t+1}$ as it depends on $s_{t-1}$ and past. Now from  $\mathbb{E}[f_t^{\lambda=1}|s_t=s]=f(s)$, we use the value of $f(s)$ from Lemma (\ref{lem:flvalue}), then we get
\begin{flalign*}
&\mathbb{E}[\rho_t \delta_t  f_t^\lambda \nabla \log{\pi_w(a_t|s_t)}] \\
&= \sum_s d(s) f(s) \mathbb{E}[\rho_t \delta_t \nabla \log{\pi_w(a_t|s_t)}|s_t=s] \\
&= \mathbb{E}[f_t \rho_t \delta_t \nabla \log{\pi_w(a_t|s_t)}].
\end{flalign*}
Now we turn into the second term which is $\mathbb{E}[\nabla {e_t}^\top \eta(\lambda)]$ with $\lambda=1$. From Lemma(\ref{lem:flvalue}), we see $\eta(1)$ is constant and zero except the last element which is one. Thus $\nabla {e_t}^\top \eta(1)$, can be written as $\nabla \left({e_t}^\top \eta(1) \right)=\nabla f_t$ since ${e_t}^\top \eta(1)=f_t$. Now by putting all together and using
$\nabla f_t=\gamma \rho_{t-1} \left(\nabla \log{\pi_w(a_{t-1}|s_{t-1})} f_{t-1}+ \nabla f_{t-1} \right) $ and by defining $\psi_t \eqdef f_t \nabla \log{\pi_w(a_t|s_t)}+\nabla f_t$, we get the recursive form of $\psi_t$, thus finishing the proof.
\end{proof}
Please note, for $\lambda<1$, finding a value for $f^\lambda$ that can be used in $\nabla J(w)$ and enabling us doing
$O(n)$ sampling  was not possible due to $\lambda$ parameter. However, later we will see that the Emphatic-TD($\lambda$) solution, solves this problem.

\section{Gradient-AC: A Gradient Actor-Critic with GTD(1) as Critic}
By $O(n)$ sampling from $\nabla J(w)$ in Theorem (\ref{thm:J_lambda1}), we present the first off-policy gradient Actor-Critic algorithm, that is convergent, in Table (\ref{tab:grad_ac}). The Critic uses GTD(1) update while the Actor uses policy-gradient method to maximize the $J(w)$ objective function. It is worth to mention that the complexity cost of the new algorithm is $O(n)$, the same as classical AC method, with no additional hyper/tuning parameters.

 \begin{algorithm}[h!]
  \caption{The Gradient-AC Algorithm}
\label{tab:grad_ac}
\begin{algorithmic}[1]
\STATE {\bfseries Initialize} $\rho$,$f$, $e$ and $\psi$, $w$, $\theta $ to zero values
\STATE {\bfseries Choose} proper step-size values for $\alpha$ and $\beta$
\REPEAT

\FOR{ each sample $(\phi,a,r,\phi')$ generated by $\pi_b$}
\STATE $ e \leftarrow  \p+ \gamma \rho e $
\STATE $f \leftarrow 1+\gamma \rho f$
\STATE $\psi \leftarrow f \nabla  \log{\pi_w(a|s)}+ \gamma \rho \psi$
\STATE $\rho \leftarrow \frac{\pi_w(a|s)}{\pi_b(a|s)}$
\STATE $\delta  \leftarrow r+\gamma \theta^{\top}\p'-\theta^{\top}\p$
\STATE $\theta \leftarrow \theta+\alpha \rho \delta e$
\STATE $w \leftarrow w+\beta \rho \delta \psi$


\ENDFOR
\UNTIL{$w$ converges}
\end{algorithmic}
\end{algorithm}

\subsection{Convergent Analysis of G-AC}
In this section we provide convergence analysis for the Gradient-AC algorithm. Since the Actor-update is based on true gradient direction, we use existing results in literature avoid repetitions. Before providing the main theorem, let us consider the following assumptions. The first set of assumptions are related to data sequence and parametrized policy:
\begin{enumerate}[label=(A\arabic{*})]
\item $a_t \sim \pi_b(.|S_t)$, such that (s.t.) $\pi_b$ is stationary and $\pi_b(a|s)>0$, $\forall (s,a) \in \S \times \A$;
\label{cond:datafirst}
\item The Markov processes $(s_t)_{t \ge 0}$ and $(s_t, a_t)_{t \ge 0}$ are in steady state, irreducible and aperiodic, with stationary distribution $\mu(s)>0$, $\forall s$;  
\item $(s_{t+1},r_{t+1}) \sim P(\cdot,\cdot|s_t,a_t)$, and $\exists C_0$ s.t. ${\rm Var}[r_{t}|s_t=s] \le C_0 $ holds almost surely (a.s.).
\label{cond:datalast}
\end{enumerate}
Assumptions on parametrized policy $\pi_w$ is as follows
\begin{enumerate}[label=(P\arabic{*})]
\item For every $(s,a)\in \S \times \A$ the mapping $w \mapsto \pi_w(a|s)$ is twice differentiable;
\item  $\sup_{w, (s,a) \in \S \times \A} \| \grad \log \pi_w(a|s) \| < \infty$ and $\grad \log \pi_w(a|s)$ has a bounded derivative  $\forall (s,a)\in \S \times \A$ . Note $\|.\|$ denotes Euclidean norm.
\end{enumerate}

\begin{enumerate}[label=(F\arabic{*})]
\item Features are bounded according to Konda \& Tsitsiklis (2003), and we follow the same noise properties conditions. 
\end{enumerate} 

For the convergence analysis we follow the two-time-scale  approach (Konda \& Tsitsiklis, 2003; Borkar, 1997; Borkar, 2008; Bhatnagar et~al., 2009).  We will use the following step-size conditions for the convergence proof:
\begin{enumerate}[label=(S\arabic{*})]
\item  $\sum_{t=0}^\infty \alpha_t = \sum_{t=0}^\infty \beta_t = +\infty$, $\sum_{t=0}^\infty (\alpha_t^2 + \beta_t^2) < +\infty$, and $\alpha_t / \beta_t \rightarrow 0$.
\label{cond:sslast}
\end{enumerate}

The actor update  can be written  in the following form:
\begin{equation}
\label{eq:omg_iter}
w_{t+1} = \Gamma\big(w_{t}+\beta_{t}(f\,(w_t,\theta_t) + M_{t+1})\big),
\end{equation}
where 
\[
f(w_t,\theta_t) = \mathbb{E}[\rho_t \delta_t \psi_t|w_t,\theta_t], \;
 M_{t+1}= \rho_t \delta_t \psi_t- f(w_t,\theta_t).
\]
Note, $\Gamma$ projects its argument to a compact set ${\cal C}$ with smooth boundary, that is, if the iterates leaves ${\cal C}$ it is projected to the closest or some convenient point in ${\cal C}$, that is,  $\Gamma(w) = \arg\min_{w'\in \cal C} \| w'-w\|$. Here, we choose ${\cal C}$ to be the largest possible compact set. 
 We consider ordinary differential equation (ODE) approach for the convergence of our proof and show our algorithm converges to the set of all asymptotically stable solution of the following ODE
\begin{equation}
\label{eq:ode_omg}
\dot{w} = \hat{\Gamma} (\grad J\, (w)), \quad w(0)\in {\cal C},
\end{equation} 
where  $
	\hat{\Gamma} (f \,(w))
	 \eqdef \lim_{0 < \varepsilon \rightarrow 0} 
		\frac{\Gamma \big(w + \varepsilon \,f(w)\big) -w}{\varepsilon}$.  Note,  if $w \in {\cal C}^\circ$ we have $\hat{\Gamma}(f\, (w)) = f\,(w)$,  otherwise, $\hat{\Gamma} (f\,(w)) $ projects  $f\,(w)$ to the tangent space of $\partial {\cal C}$ at $\Gamma(f\,(w))$.
\begin{thm}\label{thm:main}(Convergence of Gradient-AC)
Under the conditions listed in this section, as $t \rightarrow +\infty$,   $w_t$ converges to the set of all asymptotically stable solution of (\ref{eq:ode_omg}) with probability 1.
\end{thm}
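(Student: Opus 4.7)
The plan is to cast the Gradient-AC updates as a two-timescale stochastic approximation, with critic $\theta$ (step-size $\alpha_t$) and actor $w$ (step-size $\beta_t$), and apply the projected-ODE framework of Borkar (2008) and Bhatnagar et~al.\ (2009). The projection $\Gamma$ keeps $w_t$ in the compact set $\mathcal{C}$ with smooth boundary, so boundedness of the slow iterate is automatic. To invoke the framework I would verify, in order: (i) for each frozen $w$, the critic iteration converges to a unique limit $\theta(w)$; (ii) $\theta(w)$ is Lipschitz-continuous in $w$; (iii) the asymptotic drift of the actor coincides with $\nabla J(w)$; and (iv) the per-step noise $M_{t+1}$ is a martingale difference with uniformly bounded second moment.

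For (i), with $w$ held fixed the $\theta$-iteration is exactly GTD(1), which by Maei (2011) converges to the unique solution of the linear system $A(w)\theta = b(w)$, with $A(w)=\mathbb{E}[\rho_t(\phi_t-\gamma\phi_{t+1})\phi_t^\top]$ and $b(w)=\mathbb{E}[\rho_t r_{t+1}\phi_t]$; non-singularity of $A(w)$ follows from the column-independence of $\Phi$, the positivity of $d(\cdot)$, and $\gamma<1$. For (ii), assumptions P1--P2 make $\pi_w(a|s)$ twice differentiable with bounded score function, so $A(w)$ and $b(w)$ are continuously differentiable in $w$; composing with matrix inversion on a compact neighborhood yields the required Lipschitz property of $\theta(\cdot)$. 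For (iii), Theorem~\ref{thm:J_lambda1} shows that $\mathbb{E}[\rho_t\delta_t\psi_t \mid w,\theta(w)] = \nabla J(w)$, and continuity of $\nabla J$ in $w$ then follows from (ii) together with P1--P2.

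The principal obstacle is (iv): controlling the recursively-defined off-policy traces $e_t$, $f_t$, and $\psi_t$, which involve products of importance-sampling ratios $\rho_t$ and can in principle blow up. The key technical step is a uniform second-moment bound on these traces when the chain $(s_t,a_t)$ is in its stationary distribution. Using stationarity and ergodicity from A1--A3, together with the discount $\gamma<1$ and the boundedness of $\rho_t$-moments (inherited from the behavior policy's full support and the compact parameter set $\mathcal{C}$), a contraction-type fixed-point argument in the spirit of Maei (2011) and Yu (2015) yields $\sup_t \mathbb{E}[\|e_t\|^2 + f_t^2 + \|\psi_t\|^2] < \infty$. Combined with boundedness of $\theta_t$ on its tracking set and of features/rewards (F1, A3), this gives $\sup_t \mathbb{E}[\|M_{t+1}\|^2] < \infty$. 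An application of the two-timescale projected-ODE theorem (Borkar 2008, Ch.~6; Bhatnagar et~al.\ 2009, Thm.~2) then concludes that $w_t$ converges almost surely to the set of asymptotically stable equilibria of $\dot{w} = \hat{\Gamma}(\nabla J(w))$, as claimed.
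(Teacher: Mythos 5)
Your overall plan is exactly the paper's: the paper's proof of Theorem~\ref{thm:main} is a short appeal to the two-timescale projected-ODE framework (Borkar 1997; 2008; Bhatnagar et~al.\ 2009; Konda \& Tsitsiklis 2003), noting only that the actor's expected increment equals $\nabla J(w)$ by Theorem~\ref{thm:J_lambda1} and that the GTD(1) critic is a true gradient method; your items (i)--(iii) are a faithful and considerably more explicit unpacking of that citation.

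The one place where you go beyond the paper and overclaim is step (iv). The uniform bound $\sup_t \mathbb{E}[\|e_t\|^2 + f_t^2 + \|\psi_t\|^2] < \infty$ does \emph{not} follow from assumptions A1--A3, $\gamma<1$, and boundedness of the ratios $\rho_t$. The traces satisfy $f_t = 1+\gamma\rho_{t-1}f_{t-1}$, so $f_t$ is a sum of terms $\gamma^k \prod_{j=1}^{k}\rho_{t-j}$, and its second moment involves $\gamma^{2k}\,\mathbb{E}\bigl[\prod_j \rho_{t-j}^2\bigr]$; controlling this requires a mean-square contraction condition of the form ``$\gamma^2$ times the spectral radius of the $\rho^2$-weighted transition operator is less than one,'' which is an extra assumption, not a consequence of $\gamma<1$. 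Indeed, Sutton et~al.\ (2016) point out that the variance of the follow-on trace can be infinite in off-policy settings, and Yu's convergence analysis of emphatic TD deliberately avoids second-moment bounds on the traces, working instead with ergodicity of the augmented chain and convergence in a weaker sense. So either you must add the mean-square contraction assumption explicitly, or you must replace the martingale-difference-with-bounded-second-moment verification by an argument of Yu's type. To be fair, the paper's own proof is silent on this point entirely (it is folded into the undefined ``noise properties conditions'' of F1), so you have at least located the genuinely hard step --- but as written your justification of it does not go through.
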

\begin{proof}
The proof exactly follows on a two-timescales, in steps-size, convergence analysis. We use the results in Borkar (1997; 2008, see Lemma 1 and Theorem 2, page 66. Also see, Bhatnagar et~al., 2009; Konda \& Tsitsiklis; 2003). The expected update of the actor, according to Theorem (\ref{thm:J_lambda1}) is exactly $\nabla J(w)$ and also the critic, GTD(1), is a true gradient method. As such the proof will follow and for brevity, we have omitted the repetition of the proof.   
\end{proof}

\section{$\nabla J(w)$ for Emphatic-TD($\lambda$) solution}
With Emphatic-TD($\lambda$) solution, now we aim to optimize its corresponding $J(w)$ objective. Again one can show that $\eta(\lambda)$ with all zero elements except the $n^{\rm th}$ element with value of 1, will satisfy in $\eta(\lambda)$ equation (see Eq.\ref{eq:fl} and Eq.\ref{eq:fleq} ) and as a result, Eq.~\ref{eq:grad_general} becomes
\begin{eqnarray}
\nabla J(w)= \mathbb{E}[\rho_t \delta_t \left( f_t^\lambda \nabla \log{\pi_w(a_t|s_t)}  +\nabla f_t^\lambda \right) ].
\end{eqnarray}

\begin{lem}(The value of $f^\lambda$ for Emphatic-TD($\lambda$) Solution) 
Given Emphatic-TD($\lambda$) solution, from equation satisfying $f^\lambda$ in Eq.\ref{eq:fleq}, we have
\[
f^\lambda_t= m_t+\gamma \lambda \rho_{t-1} f^\lambda_{t-1},
\]
where $m_t = 1 +\gamma \rho_{t-1} (m_{t-1}-\lambda)$ and $m_{-1}=\lambda$.
\end{lem}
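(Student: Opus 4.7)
The plan is to mirror the structure of Lemma~\ref{lem:flvalue}. I would first propose the ansatz, stated right before the lemma, that $\eta(\lambda)$ is the standard basis vector with $1$ in its $n^{\rm th}$ coordinate and zeros elsewhere. Under this ansatz, $f_t^\lambda \eqdef e_t^\top \eta(\lambda)$ simply reads off the $n^{\rm th}$ component of the emphatic eligibility vector $e_t$. Because every feature vector $\phi(s)$ carries a unit entry in its $n^{\rm th}$ slot (the intercept assumption), projecting the emphatic recursion $e_t = m_t \phi_t + \gamma\lambda \rho_{t-1} e_{t-1}$ from Eq.~\ref{eq:emphatic} onto the $n^{\rm th}$ coordinate yields the claimed scalar recursion
\[
f_t^\lambda \;=\; m_t \cdot 1 \;+\; \gamma\lambda\rho_{t-1} f_{t-1}^\lambda,
\]
with $m_t$ evolving as stated in the lemma and initialization inherited from $e_{-1}=0$. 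So this half of the argument is a one-line coordinate projection and is essentially free once the ansatz is justified.

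The only non-trivial task, therefore, is to verify the ansatz, i.e.\ to check that the vector $\eta(\lambda)=(0,\ldots,0,1)^\top$ satisfies Eq.~\ref{eq:etadef}, equivalently Eq.~\ref{eq:fleq} in the form
\[
\mathbb{E}\!\left[\rho_t(\phi_t-\gamma\phi_{t+1})\,f^\lambda(s_t)\right] \;=\; \mathbb{E}[\phi_t],
\]
where $f^\lambda(s)=\lim_{t\to\infty}\mathbb{E}[f_t^\lambda\mid s_t=s]$ is produced by the recursion just derived. The strategy is exactly the telescoping matrix calculation used in the final display of Lemma~\ref{lem:flvalue}: write the left-hand side as $\Phi^\top D F (I-\gamma P^\pi){\bf 1}$, where $F$ is the diagonal matrix with entries $f^\lambda(s)$, and then invoke the defining property of the emphatic weighting from Sutton et al.\ (2016)---the purpose of the factor $m_t$ is precisely to make $d^\top F$ act as the correct left-eigen-type object so that $\Phi^\top D F (I-\gamma P^\pi){\bf 1}$ collapses to $\sum_s d(s)\phi(s) = \mathbb{E}[\phi_t]$. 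Uniqueness of $\eta(\lambda)$, which follows from invertibility of $A(\lambda)$ (guaranteed by the linear independence of the columns of $\Phi$), then promotes the ansatz to \emph{the} solution.

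The main obstacle I anticipate is getting the $\lambda$-dependent emphatic identity right. In Lemma~\ref{lem:flvalue} the special value $\lambda=1$ made the required identity reduce to $Df = (I-\gamma {P^\pi}^\top)^{-1}d$, which is clean. For general $\lambda\in[0,1)$ the emphatic trace mixes $m_t$ with the decay factor $\gamma\lambda\rho_{t-1}$, so one must verify the corresponding stationary identity for $d(s)f^\lambda(s)$ along the importance-weighted chain before the telescoping step goes through. Once that identity is established (and it is essentially the content of the emphatic-TD convergence theory of Sutton et al., 2016, specialized to our notation), the proof closes exactly as in Lemma~\ref{lem:flvalue}.
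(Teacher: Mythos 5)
Your proposal is correct and takes essentially the same route as the paper, whose proof in the main text simply states that it follows the derivations of Lemma~\ref{lem:flvalue} (with the calculation deferred to the supplementary material): posit $\eta(\lambda)=(0,\dots,0,1)^\top$, read off $f_t^\lambda$ as the $n^{\rm th}$ coordinate of the emphatic trace using the unit intercept feature, and confirm the ansatz by the same telescoping computation plus uniqueness from the invertibility of $A(\lambda)$. Two small remarks: the matrix expression should read $\left[(I-\gamma P^\pi)\Phi\right]^\top DF\mathbf{1}$ as in Lemma~\ref{lem:flvalue} rather than $\Phi^\top DF(I-\gamma P^\pi)\mathbf{1}$, and the stationary identity you defer to Sutton et al.\ (2016) can be checked directly, since the recursions give $(I-\gamma\lambda {P^\pi}^\top)Df^\lambda=Dm$ and $(I-\gamma {P^\pi}^\top)Dm=(I-\gamma\lambda {P^\pi}^\top)d$, and all of these matrices commute as power series in ${P^\pi}^\top$, yielding $(I-\gamma {P^\pi}^\top)Df^\lambda=d$ as required.
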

\begin{proof}
The proof follows derivations in Lemma (\ref{lem:flvalue}) and due to limitation of space, we leave the main calculation steps in supplementary materials.
\end{proof}

\begin{thm}($\nabla J(w)$ for Emphatic-TD($\lambda$) Solution) Regardless of on-policy or off-policy or the value of $\lambda$,  we have
\begin{eqnarray}
\nabla J(w) = \lim_{t \rightarrow \infty} \mathbb{E}[ \rho_t  \delta_t \psi_t ],
\end{eqnarray}
where  $\psi_t = f^\lambda_t \nabla \log{\pi_w(a_t|s_t)}+z_t + \gamma \rho_{t-1} \psi_{t-1}$, $f^\lambda_t=m_t+\gamma \lambda \rho_{t-1} f^\lambda_{t-1}$,  $m_t  =1+\gamma \rho_{t-1} (m_{t-1}-\lambda) $, $z_t =\gamma \rho_{t-1}\left( \left[ (m_{t-1}-\lambda)\nabla \log{\pi_w(a_{t-1}|s_{t-1})} \right]+ z_{t-1}\right)$, $m_{-1}=\lambda$, $z_{-1}=0$.
\end{thm}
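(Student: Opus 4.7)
The plan is to mirror the proof structure of Theorem \ref{thm:J_lambda1} and Lemma \ref{lem:flvalue}, adapted to general $\lambda$. I start from Eq.~\ref{eq:grad_general},
$$\nabla J(w) = \mathbb{E}[\rho_t \delta_t (f_t^\lambda \nabla \log \pi_w(a_t|s_t) + \nabla e_t^\top \eta(\lambda))],$$
and plug in the preceding lemma's characterization of $\eta(\lambda)$. Because $\eta(\lambda)$ has all zero components except a $1$ in the $n$-th coordinate, $e_t^\top \eta(\lambda)$ is just the last component of $e_t$. Since every feature vector carries a unit value in its $n$-th coordinate by assumption, the $n$-th component of the emphatic eligibility trace satisfies exactly $f_t^\lambda = m_t + \gamma\lambda\rho_{t-1} f_{t-1}^\lambda$, so the bracket reduces to $f_t^\lambda \nabla \log \pi_w(a_t|s_t) + \nabla f_t^\lambda$.

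Next I would compute $\nabla f_t^\lambda$ by differentiating the recursion, using $\nabla \rho_{t-1} = \rho_{t-1} \nabla \log \pi_w(a_{t-1}|s_{t-1})$, which yields
$$\nabla f_t^\lambda = \nabla m_t + \gamma\lambda\rho_{t-1}\bigl[f_{t-1}^\lambda \nabla \log \pi_w(a_{t-1}|s_{t-1}) + \nabla f_{t-1}^\lambda\bigr].$$
Differentiating $m_t = 1 + \gamma\rho_{t-1}(m_{t-1} - \lambda)$ analogously gives
$$\nabla m_t = \gamma\rho_{t-1}\bigl[(m_{t-1}-\lambda)\nabla \log \pi_w(a_{t-1}|s_{t-1}) + \nabla m_{t-1}\bigr],$$
which is precisely the recursion defining $z_t$ with the same initial condition, so $z_t = \nabla m_t$ for all $t \ge 0$. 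Substituting back, the gradient under the expectation is a sum of a fresh policy-gradient term at time $t$, the accumulated gradient-of-emphasis contribution $z_t$, and a trace term inherited from the previous step.

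Collecting these pieces is then a matter of regrouping: the integrand equals $f_t^\lambda \nabla \log \pi_w(a_t|s_t) + z_t + (\text{trace of earlier policy-gradient contributions})$, which by induction on $t$ rewrites as the compact recursion $\psi_t = f_t^\lambda \nabla \log \pi_w(a_t|s_t) + z_t + \gamma\rho_{t-1}\psi_{t-1}$ stated in the theorem. The conditioning argument from Theorem \ref{thm:J_lambda1} applies verbatim: conditioning on $s_t$ makes the past-dependent trace terms independent of $s_{t+1}$ in the TD error, so the $f_t^\lambda$ and $z_t$ factors can be kept inside the overall expectation. The result does not require on-policy assumptions because the identity $\eta(\lambda) = e_n$ holds for Emphatic-TD by the lemma, whereas in GTD($\lambda$) it only held at $\lambda = 1$.

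The principal obstacle is the bookkeeping in unrolling two coupled recursions ($f_t^\lambda$ and $m_t$) through the gradient operator and verifying that the aggregated eligibility-style trace collapses into the stated $\psi_t$ recursion without leaving residual terms; consistency checks at $\lambda = 1$ (where $m_t \equiv 1$, $z_t \equiv 0$, and the formula must reduce to Theorem \ref{thm:J_lambda1}) and at on-policy $\rho_t \equiv 1$ (where it must reduce to Theorem \ref{thm:on_policy}) give useful sanity verifications. The limit $t \to \infty$ is well-posed by the stationarity assumption \ref{cond:datafirst}--\ref{cond:datalast} together with the boundedness properties of Emphatic-TD established in Sutton et~al.\ (2016), which ensure the expected traces stabilize.
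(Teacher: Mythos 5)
Your approach is exactly the one the paper intends: its own proof of this theorem is a one-line deferral to the derivation of Theorem~\ref{thm:J_lambda1} together with Lemma~3, and you correctly fill in the two key steps that deferral presupposes — namely that $\eta(\lambda)$ being the $n$-th standard basis vector makes $e_t^\top\eta(\lambda)$ equal to the $n$-th component of the emphatic trace, which (by the unit feature) obeys $f_t^\lambda = m_t + \gamma\lambda\rho_{t-1}f_{t-1}^\lambda$, and that $z_t = \nabla m_t$ by matching recursions and initial conditions. Both of those identifications are right, and the conditioning argument you import from Theorem~\ref{thm:J_lambda1} carries over as you say.

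There is, however, one genuine inconsistency in your final regrouping step, which you flag as ``bookkeeping'' but then assert rather than verify. Your own (correct) differentiation gives
\begin{equation*}
\nabla f_t^\lambda = \nabla m_t + \gamma\lambda\rho_{t-1}\bigl[f_{t-1}^\lambda\nabla\log\pi_w(a_{t-1}|s_{t-1}) + \nabla f_{t-1}^\lambda\bigr],
\end{equation*}
so if $\psi_t \eqdef f_t^\lambda\nabla\log\pi_w(a_t|s_t) + \nabla f_t^\lambda$, the recursion that actually falls out is $\psi_t = f_t^\lambda\nabla\log\pi_w(a_t|s_t) + z_t + \gamma\lambda\rho_{t-1}\psi_{t-1}$, with decay factor $\gamma\lambda$ inherited from the $f^\lambda$ recursion — not the $\gamma\rho_{t-1}\psi_{t-1}$ you (and the theorem statement) write. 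If one insists on a decay of $\gamma\rho_{t-1}$, the correction term is forced to be $\nabla m_t - \gamma(1-\lambda)\rho_{t-1}\psi_{t-1}$, which is not the stated $z_t$. The discrepancy vanishes at $\lambda=1$ (where $m_t\equiv 1$, $z_t\equiv 0$), which is precisely why your proposed sanity check against Theorem~\ref{thm:J_lambda1} cannot detect it. So either the theorem's $\psi_t$ recursion carries a typographical omission of $\lambda$, or an additional argument is needed; as written, your derivation does not collapse to the stated recursion ``without leaving residual terms,'' and you should either carry the $\lambda$ through or explicitly reconcile the difference.
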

\begin{proof}
The derivations follows the derivations Theorem (\ref{thm:J_lambda1}) using the value of $f_t^\lambda$ in Lemma (3). 
\end{proof}
Thus the actor can have the following update:
\[
w_{t+1}=w_t+\beta_t \rho_t \delta_t \psi_t,
\]
where $\psi_t$ is defined in Theorem (4). In addition, the proof of convergence exactly follows Theorem (3), as such we avoid repetitions.

\section{Counterexample for Off-PAC}
 Off-PAC in Degris et~.al (2012), is not based on a true gradient direction of $J(w)$ and as such may not optimize the value functions. Here we provide a simple counterexample for the case of $\lambda=0$ for Off-PAC and compare it with Gradient-AC.  For the counterexample, let us consider an MDP with two states and two actions whose reward function for the state $1$ is $0$ and for the state $2$ is $1$. The environment is deterministic and if the agent takes action $a=1$ it moves to state $s=2$ and for taking action $a=2$ it moves to state $s=1$. Here, $\gamma=0.99$, $\p(s=1)=1$ and  $\p(s=2)=2$ (Also see $\theta \rightarrow 2\theta$ problem in Sutton et~al., 2016). The optimal policy is to take action 1. Let us initialize the target policies with the optimal policy. GTD(0) solution, for this case is $\theta=-2/(4\gamma-3)$, which is negative and GTD(1) solution is $\theta=1/(2(1-\gamma))$ which is positive. The general form of update for taking action $1$ from state $1$ is, $(1+2\gamma \theta-\theta) \nabla \log{\pi_w(a=1|s=1)}$, however, we see that by substituting the $\theta$ values, Off-PAC will reduce the probability of taking action $1$ while Gradient-GTD(1) aims to increase it. Thus Off-PAC goes against the direction of $J(w)$. 
\section{Conclusions and Future Works}
We proposed the first class of convergent Actor-Critic methods that work with off-policy learning and function approximation. Our algorithms are suitable to use for large-scale problems: they are online, incremental and have $O(n)$ complexity, in terms of per-time-step computation and memory, without introducing any new hyper-parameters. We focused primarily on problems where the action is continuous or large so that action-value functions would not be suitable to use due to the curse-of-dimensionality. Thus, our algorithms (just like classical Actor-Critic methods)  won't require to have a direct representation for actions. For example, consider a policy with underlying Normal (or log-normal) distribution, $\pi_w(a|s) \sim N(\mu_w(s),\sigma^2)$, where $\mu(s)=w^\top \phi(s)$. By learning the parameterized mean, $\mu_w(s)$ (and variance), one can find a (near-) optimal policy. Such types of policies can be used in various problems, including robotics or bidding strategies in Ad Tech., etc. We also showed that our Actor-Critic algorithms become just like classical Actor-Critic methods for the problem of on-policy, so they are a direct generalization. 

In addition, briefly we discussed about the solution quality of MSE in reinforcement learning, and our preliminary result shows that GTD(1)/TD(1) can provide a good quality solution with a simple normalization of eligibility traces . If we can reduce the variance by using $\lambda=1$, then it implies: 1) No $\lambda$ hyper-parameter tuning is needed; 2) We get one single algorithm for value function approximation, since GTD(1) and Emphatic-TD(1) become identical and their solution becomes MSE-Solution, which is easy to understand, 3) We get a single convergent Actor-Critic algorithm. 

Future works include, massive empirical studies, studies on bias-variance trade-offs, and potentially proposing new interesting ideas for variance reduction methods in reinforcement learning with classical Actor-Critic architecture.
\newpage
\section*{References}
\medskip
\def\hangin{\hangindent=0.2in}
\hangin
Barto, A. G., Sutton, R. S., \& Anderson, C. (1983). Neuron-like elements that can solve difficult learning control problems. {\em IEEE Transaction on Systems, Man and Cybernetics}, 13, 835Ð846.

\hangin
Bhatnagar, S., Sutton, R. S., Ghavamzadeh, M., Lee, M. (2009). Natural Actor-Critic Algorithms. Automatica.

\hangin
Borkar, V. S. (1997). Stochastic approximation with two timescales. {\em Systems and Control
Letters} 29:291-294.

\hangin
Borkar, V. S. (2008). Stochastic Approximation: A Dynamical Systems Viewpoint. Cambridge University Press.

\hangin

Degris, T., White, M. and Sutton, R.~S. (2012). Off-Policy
Actor-Critic. \textit{Proceedings of the 29th International Conference on Machine Learning},  Edinburgh, Scotland, UK.

\hangin

Gu, S., Lillicrap, T., Ghahramani, Z., Turner, R. E., and
Levine, S. (2017). Q-prop: Sample-efficient policy gradient with
an off-policy critic.  \textit{International Conference on Learning Representations}.

\hangin
Kushner, H. J. \& Yin, G. G. (2003). Stochastic Approximation Algorithms and Applica-
tions. Second Edition, Springer-Verlag.

\hangin
Konda,  V. R., and  Tsitsiklis, J. N. (2003)  "Actor-Critic Algorithms" , {\em SIAM Journal on Control and Optimization}, Vol. 42, No. 4, pp. 1143-1166.

\hangin
Maei, H. R., Szepesv\'ari, Cs., Bhatnagar, S., Sutton, R. S. (2010). Toward off-policy learning control with function approximation. {\em Proceedings of the 27th International Conference on Machine Learning}, pp.~ 719--726.

\hangin
Maei, H. R. (2011) Gradient temporal-difference learning algorithms.
PhD dissertation, University of Alberta.

\hangin

Precup, D., Sutton, R.~S. and Dasgupta, S. (2001).
Off-policy temporal-difference learning with function approximation.  {\em Proceedings of the 18th International Conference on Machine Learning}, pp.~417--424.

\hangin
Sutton, R.S. (1984). Temporal credit assignment in reinforcement learning (106 Mbytes).  Ph.D. dissertation, Department of Computer Science, University of Massachusetts, Amherst, MA 01003. Published as COINS Technical Report 84-2.

\hangin

Sutton, R.~S. (1988).
\newblock Learning to predict by the method of temporal differences.
\newblock {\em Machine Learning 3}:9--44.

\hangin

Sutton, R.~S., Barto, A.~G. (1998).
{\em Reinforcement Learning: An Introduction}.
MIT Press.

\hangin

Sutton, R.~S., Maei, H.~R, Precup, D., Bhatnagar, S., Silver, D., Szepesv\'ari, Cs. \& Wiewiora, E. (2009). 
\newblock Fast gradient-descent methods for temporal-difference learning with linear function approximation.  
\newblock
In \textit{Proceedings of the 26th International Conference on Machine Learning}, pp. 993--1000.  Omnipress.

\hangin

Sutton, R. S., McAllester, D. A., Singh, S. P., and Mansour, Y. (1999). Policy gradient methods for
reinforcement learning with function approximation. In \textit{Advances in Neural Information Processing
Systems}, pages 1057-1063.

\hangin
Sutton, R. S., Mahmood, A. R., White, M. (2016). An emphatic approach to the problem of off-policy temporal-difference learning. {\em Journal of Machine Learning Research} 17(73):1--29.
\hangin

Tang, J. and Abbeel, P. (2010) On a connection between
importance sampling and the likelihood ratio policy
gradient. In \textit{Advances in Neural Information Processing
Systems}.

\hangin

Williams, R. J. (1987). A class of gradient-estimating algorithms for reinforcement learning in neural networks. In \textit{Proceedings of the IEEE First International Conference on Neural Networks}, San Diego, CA.



\nocite{langley00}

\bibliography{example_paper}
\bibliographystyle{icml2018}


\newpage

\includepdf[pages={1}]{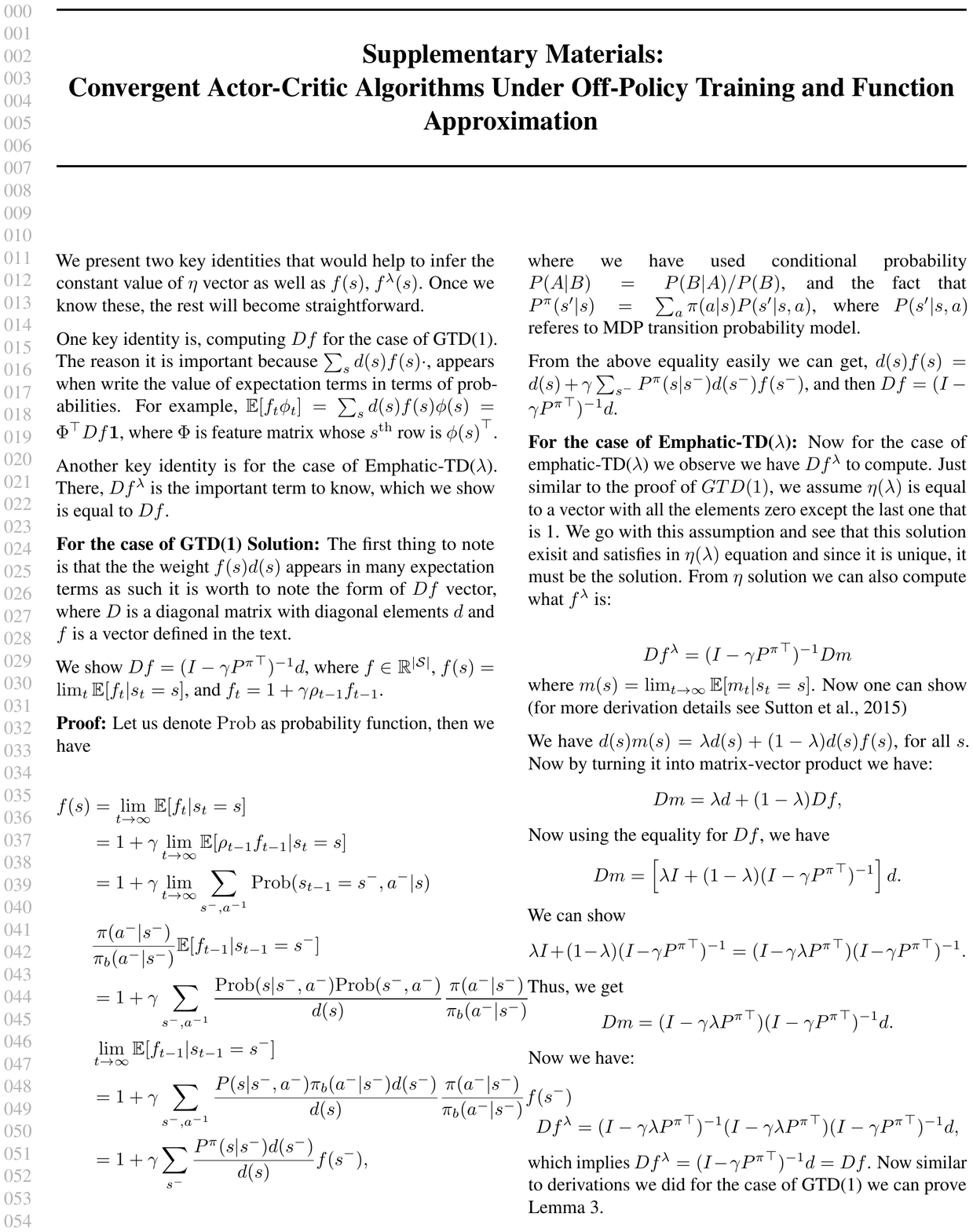}
\end{document}